\documentclass[runningheads]{llncs}
\usepackage[T1]{fontenc}
% T1 fonts will be used to generate the final print and online PDFs,
% so please use T1 fonts in your manuscript whenever possible.
% Other font encondings may result in incorrect characters.
%
\usepackage{graphicx}
\usepackage{parskip}
\usepackage{enumitem}
\usepackage{url}
\usepackage{hyperref}
\usepackage{booktabs} 
\usepackage{tablefootnote}

\usepackage{subcaption} 

\usepackage{tikz}
\usetikzlibrary{shapes.geometric, arrows, decorations.markings, shadings}
\usetikzlibrary{arrows.meta,positioning,decorations.pathmorphing}

\tikzstyle{startstop} = [rectangle, rounded corners, minimum width=3cm, minimum height=1cm,text centered, draw=black, fill=red!30]
\tikzstyle{process} = [rectangle, minimum width=3cm, minimum height=1cm, text centered, draw=black, fill=blue!30]
\tikzstyle{arrow} = [thick,->,>=stealth]

\usepackage{algorithm}
\usepackage{algorithmic}
\usepackage{amsfonts} 
\usepackage{amsmath}

\usepackage{amsmath}
\usepackage{bm}
\usepackage{xspace}
\usepackage{xcolor}

\usepackage{multirow}
\usepackage{caption}
% Used for displaying a sample figure. If possible, figure files should
% be included in EPS format.
%
% If you use the hyperref package, please uncomment the following two lines
% to display URLs in blue roman font according to Springer's eBook style:
%\usepackage{color}
%\renewcommand\UrlFont{\color{blue}\rmfamily}
%\urlstyle{rm}
%
\newcommand{\R}{\ensuremath{\mathbb{R}}\xspace}

\newcommand{\x}{\ensuremath{\mathbf{x}}\xspace}

\newcommand{\y}{\ensuremath{\mathbf{y}}\xspace}

\newcommand{\Lmc}{\ensuremath{\mathcal{L}}\xspace}

\newcommand{\tran}{\ensuremath{\mathit{\tran}}}

\newcommand{\Agg}{\textit{\Agg}}
\newcommand{\HiT}[1]{\textbf{\textit{x}}_{#1}}

\newcommand{\hit}{\text{HiT}\xspace}
\newcommand{\ont}{\text{OnT(w/o r)}\xspace}
\newcommand{\ontr}{$\text{OnT}$\xspace}

\newcommand{\Omc}{\ensuremath{\mathcal{O}}\xspace}

\newcommand{\Imc}{\ensuremath{\mathcal{I}}\xspace}

\newcommand{\NC}{\ensuremath{\mathsf{N_C}}\xspace}
\newcommand{\NR}{\ensuremath{\mathsf{N_R}}\xspace}

\newcommand{\NI}{\ensuremath{\mathsf{N_I}}\xspace}

\newcommand{\boxsqel}{Box$^2$EL\xspace}
\begin{document}
\title{Language Models as Ontology Encoders\thanks{granted by PSRC projects OntoEm (EP/Y017706/1)
and ConCur (EP/V050869/1)}}

\author{Hui Yang\inst{1} \and
Jiaoyan Chen\inst{1} \and
Yuan He\inst{2,4}\thanks{Work done prior to joining Amazon} \and
Yongsheng Gao\inst{3} \and
Ian	Horrocks\inst{4}
}
% \author{Hui Yang\inst{1}\orcidID{0000-0002-4262-4001} \and
% Jiaoyan Chen\inst{1}\orcidID{0000-0003-4643-6750} \and
% Yuan He\inst{2,4}\orcidID{0000-0002-4486-1262}\thanks{Work done prior to joining Amazon} \and
% Yongsheng Gao\inst{3}\orcidID{0000-0002-3468-2930} \and
% Ian	Horrocks\inst{4}\orcidID{0000-0002-2685-7462}
% }

\authorrunning{Hui and Jianyan, et al.}

\institute{The University of Manchester \\ \email{\{hui.yang-2, jiaoyan.chen\}@manchester.ac.uk} \and
Amazon \email{lawhy@amazon.com} \and
SNOMED International \email{yga@snomed.org} \and
University of Oxford \email{Ian.Horrocks@cs.ox.ac.uk}
}

\maketitle              % typeset the header of the contribution
\begin{abstract}
OWL (Web Ontology Language) ontologies which are able to formally represent complex knowledge and support semantic reasoning have been widely adopted across 
various domains such as healthcare and bioinformatics. 
Recently, ontology embeddings 
have gained wide attention due to their potential to infer plausible new knowledge and approximate complex reasoning. 
However, existing methods face notable limitations: geometric 
model-based embeddings typically overlook valuable textual 
information, resulting in suboptimal performance, 
while the approaches that incorporate text, which are often based on language models, fail to preserve the logical structure. 
In this work, we propose a new ontology embedding method \ontr, which tunes a Pretrained Language Model (PLM) via geometric modeling in a hyperbolic space for effectively incorporating textual labels and simultaneously preserving class hierarchies and other logical relationships of Description Logic $\mathcal{EL}$. 
Extensive experiments on four real-world ontologies show that 
\ontr consistently outperforms the baselines including the state-of-the-art 
across both tasks of prediction and inference of axioms. 
\ontr also demonstrates strong potential in real-world applications, indicated by its robust transfer learning abilities and effectiveness in real cases of discovering new axioms in SNOMED CT construction. Data and code are available at \url{https://github.com/HuiYang1997/OnT}.
    
\keywords{Ontology Embedding \and Language Models \and Description Logic \and Web Ontology Language \and Hyperbolic Space.}
\end{abstract}

\section{Introduction}

Ontologies of Web Ontology Language (OWL) can represent explicit, formal, and shared knowledge of a domain, supporting complex knowledge by incorporating Description Logic (DL) axioms \cite{DBLP:conf/birthday/BaaderHS05,DBLP:journals/ijinfoman/Fitz-GeraldW10a}. 
These ontologies have become indispensable in domains requiring precise semantic representations; typical examples include the Gene Ontology (GO) \cite{ashburner2000gene} in bioinformatics and SNOMED CT \cite{donnelly2006snomed} in healthcare. With the emergence of neural representation learning techniques \cite{bengio2013representation}, there has been growing interests in developing embedding approaches for ontologies that can encode their entities (which include concepts, roles and instances) as numerical vectors while effectively preserving their structural and semantic properties within the vector space for supporting different downstream tasks of prediction, (approximate) inference, retrieval and so on, usually in combination with other machine learning and statistical methods \cite{chen2024ontology,kulmanov2021semantic}.

Despite significant advancements, the current methods---which can be divided into two types: \textit{geometric model-based} and \textit{language model-based}---still have distinct shortcomings.
\begin{enumerate}[leftmargin=*]
    \item \textit{Geometric Model-Based Methods:}  
    These methods represent ontology entities as geometric objects, 
    such as instances as points and concepts as areas, to construct 
    a geometric model of the target ontology \cite{chen2024ontology}. 
    For example, the early method ELEM \cite{kulmanov_embeddings_2019} 
    represents concepts as balls, while more recent methods like 
    BoxEL~\cite{xiong_faithiful_2022}, 
    Box$^2$EL~\cite{DBLP:conf/www/JackermeierC024}, and 
    TransBox~\cite{yang2025transbox} represents concepts as boxes. 
   Geometric model-based methods preserve logical relationships by translating DL operators into geometric operations—such as representing concept subsumption as area inclusion and conjunction as intersection—thereby supporting reasoning in the vector space.
   %(approximate) axiom inference based on deductive reasoning. 
    However, they mostly neglect valuable textual information, such as entity labels that are common in real-world ontologies.
   This results in suboptimal performance in ontology learning tasks such as axiom prediction, and an inability to embed new entities that are unseen during training—a critical limitation when dealing with dynamic and transfer scenarios.
    %in ontology learning 
    %tasks and limits both inductive reasoning capabilities 
    %(e.g., handling newly introduced entities) and the explainability 
    %provided by textual information.
\item \textit{Language-Model-Based Methods:} 
    These methods, exemplified by  OPA2Vec \cite{DBLP:journals/bioinformatics/SmailiGH19} and OWL2Vec* \cite{DBLP:journals/ml/ChenHJHAH21}, focus on encoding the textual information of ontologies, often following a pipeline which first transforms the axioms and the graph structure into sentences and then tunes a language model to learn entity representations from the sentences \cite{chen2024ontology}. 
    They incorporate both text and formal semantics in the embeddings, which can lead to higher similarities between more related entities \cite{kulmanov2021semantic}, but ignore the preservation of logical relationships, which limits their effectiveness in inference.
    Moreover, most methods generate ontology embeddings using traditional non-contextual word embedding models like Word2Vec, with limited exploration towards the more recent Transformer-based PLMs, which produce layer-specific contextual embeddings rather than general representations. 
    Recently, HiT \cite{hit} has been proposed to bridge this gap by training a PLM with geometric modeling for embedding both concept hierarchies and labels.
    However, HiT is designed for taxonomies, and does not support complex concepts and logical relationships beyond concept subsumption in OWL ontologies.
\end{enumerate}

To address these limitations, we propose \textbf{On}tology \textbf{T}ransformer encoder (\ontr), which integrates the strengths of PLM for contextual text embedding, and geometric modeling in a hyperbolic space for logical structure embedding.
\ontr enables the preservation of logical relationships of Description Logic $\mathcal{EL}$, thus augmenting axiom inference in the vector space.
It effectively incorporates more kinds of semantics for better performance in axiom prediction, and supports the embedding of new entities.

\ontr mainly consists of two steps: (1) Complex concepts (denoted as $C, D$) and roles (denoted by $r$) are embedded into vector representations using a PLM. The embeddings of complex concepts are derived from a verbalization process that generates textual descriptions for these concepts,  while roles are represented as transition functions operating within the space of concept vectors.  
(2) General Concept Inclusion (GCI) axioms of the form $C \sqsubseteq D$ are represented
by regarding them as a hierarchical pre-order $C \prec D$, which is then encoded in a Poincaré ball. 
Moreover, to effectively capture the logical patterns associated with the existential qualifier (i.e., $\exists r.$) and conjunction (i.e., $\sqcap$), \ontr incorporates two specialized loss functions that leverage role embeddings in conjunction with concept embeddings.

Through extensive experiments on real-world ontologies of 
GALEN \cite{rector1996galen}, Gene Ontology (GO) \cite{ashburner2000gene}, 
and Anatomy (Uberon) \cite{mungall2012uberon}, we demonstrate that our 
method \ontr outperforms current state-of-the-art geometric-model or language-model based approaches in both prediction and inference tasks. 
Notably, in terms of the Mean Rank metric, \ontr achieves up to a sevenfold improvement over existing methods, as observed in the prediction task on the GO dataset. Moreover, it exhibits strong transfer learning capabilities, successfully identifying missing and incorrect direct subsumptions in the SNOMED use case, which highlights the practical potential of our approach for real-world ontology applications.

\section{Related Work}
\noindent\textbf{Geometric model-based methods} encode ontologies by representing their concepts and instances as geometric objects in vector spaces and their roles (i.e., binary relations) as specific geometric relationships between these objects. These methods construct an (approximate) geometric model of the ontology, interpreting logical relationships as geometric meanings. For example, the subsumption of concepts can be understood as the set-inclusion of corresponding geometric objects. 
Various geometric representations have been explored for representing concepts, including boxes (TransBox \cite{yang2025transbox}, \boxsqel \cite{DBLP:conf/www/JackermeierC024}, BoxEL \cite{xiong_faithiful_2022}, ELBE \cite{peng_description_2022}), balls (ELEM \cite{kulmanov_embeddings_2019}, EMEM++ \cite{DBLP:conf/aaaiss/MondalBM21}), cones \cite{DBLP:conf/nips/GargISVKS19,zhapa2023cate}, and fuzzy sets \cite{tang2022falcon}. The most common way of representing relations is using transition functions defined by the addition of a given vector. 
Among these approaches, box-based methods have gained prominence due to their closure under intersection --- the intersection of two boxes yields another box --- enabling them to naturally capture concept conjunctions through geometric operations. In contrast, other geometric representations lack this crucial property, limiting their expressiveness for certain logical operations. 
The majority of existing methods focus on $\mathcal{EL}$-family ontologies, with notable exceptions of catE \cite{zhapa2023cate} and FALCON \cite{tang2022falcon}, which provide embeddings for $\mathcal{ALC}$-ontologies.

\textbf{Language model-based methods} originated from early approaches utilizing word embeddings like Word2Vec (which are widely regarded as a kind of neural language models), such as OPA2Vec \cite{DBLP:journals/bioinformatics/SmailiGH19} and OWL2Vec* \cite{DBLP:journals/ml/ChenHJHAH21}. 
They generate embeddings for ontology entities by fine-tuning a word embedding model with the ontology's information, and then apply these embeddings to downstream tasks via an additional, separated prediction model such as a binary classifier. More recently, inspired by the rapid advancement of PLMs based on Transformer architectures \cite{DBLP:conf/nips/VaswaniSPUJGKP17}, a variety of PLM-based approaches such as SORBET and BERTSubs \cite{DBLP:conf/om2/AmirBEEBNZ23,DBLP:journals/www/ChenHGJDH23,DBLP:conf/semweb/GosselinZ23,DBLP:conf/aaai/0008CA022,DBLP:journals/corr/abs-2403-17216} have been developed for ontology-related tasks, particularly in the context of ontology completion and alignment. However, these methods jointly fine-tune a PLM and an additional layer that is specific to a downstream task. Thus they do not yield general embeddings that are applicable across different tasks.
Furthermore, all language model-based methods---whether based on Word2Vec or transformers---fail to capture logical structures such as the transitivity of subsumption relationships, thereby preventing direct inference within the vector space.

Recently, He et al. proposed HiT \cite{hit} --- a method that combines language models with hierarchical embedding techniques in hyperbolic spaces to embed taxonomies that consist of hierarchical structures of named concepts. However, this approach overlooks role embeddings and the logical operations that construct complex concepts from basic ones, which are prevalent in real-world ontologies. In this work, we address this limitation with role embeddings and specialized loss functions that capture the logical operators used to build complex concepts from fundamental ones.

We exclude work on Knowledge Graphs such as KG-BERT~\cite{DBLP:journals/corr/abs-1909-03193} and KEPLER~\cite{DBLP:journals/tacl/WangGZZLLT21}, as our focus is on OWL ontologies, which use Description Logic to model conceptual knowledge—fundamentally different from relational fact-based Knowledge Graphs.

\section{Preliminary}
\subsection{Ontology}
OWL ontologies employ sets of statements, known as axioms, to represent and reason about concepts (unary predicates) and roles (binary predicates). 
In this work, we focus on $\mathcal{EL}$-ontologies, which are investigated by most existing geometric embedding methods. These ontologies strike a balance between expressivity and reasoning efficiency, making them widely applicable~\cite{DBLP:journals/ai/BaaderG24}.
Consider the disjoint sets $\NC= \{A,B,\ldots\}$, $\NR = \{r, t, \ldots\}$, and $\NI=\{a,b,\ldots\}$, 
representing \emph{concept names} (a.k.a. \emph{atomic or named concepts}), \emph{role names}, and \emph{individual names}, 
respectively. \emph{$\mathcal{EL}$-concepts} (complex concepts) are defined recursively from these elements as 
$\top ~|~ \bot ~| ~A~ | ~ C \sqcap D~ | ~\exists r. C ~| \left\{ a \right\}$. 
An \emph{$\mathcal{EL}$-ontology} is a finite collection of TBox axioms like $C\sqsubseteq D$, 
and ABox axioms like $A(a)$ and $r(a, b)$. 
Note that, through the paper, we denote by atomic concepts as $A, B$, and any 
$\mathcal{EL}$-concepts as $C, D$.

\begin{example}\label{exp:ont}
Given atomic concepts $\textit{Teacher}, \textit{Student}, \textit{Class}$, 
roles $\textit{teach}, \textit{hasClass}$ and $\textit{study}$, and individuals $\textit{Dr.Smith}, \textit{Emma}$, there is a small $\mathcal{EL}$-ontology 
 composed of TBox axioms: 
\(\textit{Person} \sqcap \exists \textit{teach.Class}\sqsubseteq \textit{Teacher}, \ \textit{Person} \sqcap \exists \textit{study.Class} \sqsubseteq \textit{Student},\)
and ABox axioms:
\(\textit{Teacher}(\textit{Dr.Smith}), \textit{hasClass}(\textit{Emma}, \textit{Math101}).\)
\end{example}

\noindent\textbf{Normalization of $\mathcal{EL}$-ontology}.
In this work, we focus on the TBox part. 
Note that Abox axioms can be transformed into  equivalent TBox axioms by treating instances as classes~\cite{DBLP:conf/www/JackermeierC024}. 
An $\mathcal{EL}$-ontology $\mathcal{O}$ is normalized if all its (TBox) axioms are 
of one of the following forms:
\begin{equation}\label{eq:normalized_ontology}
A\sqsubseteq B,\quad A_1\sqcap A_2\sqsubseteq B,\quad A\sqsubseteq \exists r. B,\quad \exists r. B\sqsubseteq A.
\end{equation}
For simplicity, we refer to these four types of normalized axioms as NF1-NF4 (where NF denotes normalized form), respectively. It is worth noting that most existing geometric embedding methods are exclusively applicable to normalized ontologies. 
Any $\mathcal{EL}$-ontology can be transformed into a set of normalized axioms~\cite{DBLP:conf/ijcai/BaaderBL05} by introducing new atomic concepts along with corresponding names, as illustrated in the following example.

\begin{example}
To normalize the axiom $\textit{Person} \sqcap \exists \textit{teach.Class} \sqsubseteq \textit{Teacher}$ from Example~\ref{exp:ont}, we introduce a new atomic concept \( N_1 \equiv \exists \mathit{teach.Class} \). This transfers the original axiom into three normalized axioms: 
\[
\mathit{Person} \sqcap N_1 \sqsubseteq \mathit{Teacher}, \quad N_1 \sqsubseteq \exists \mathit{teach.Class}, \quad \text{and} \quad \exists \mathit{teach.Class} \sqsubseteq N_1.
\]
Here, the newly introduced concept \( N_1 \), derived from \( \exists \mathit{teach.Class} \), can be informally interpreted as “Something that teaches some Class.”
\end{example}

%(see Appendix \ref{app:normalization} for details).

\noindent\textbf{Inference} An \emph{interpretation} $\mathcal{I} = (\Delta^\mathcal{I}, \ \cdot^\mathcal{I})$ 
comprises a non-empty set~$\Delta^\mathcal{I}$ and a function~$\cdot^\Imc$ that 
maps each $A \in \NC$ to $A^\mathcal{I} \subseteq \Delta^\mathcal{I}$, 
each $r \in \NR$ to $r^\mathcal{I} \subseteq \Delta^\mathcal{I} \times \Delta^\mathcal{I}$, 
and each $a \in \NI$ to $a^\Imc \in \Delta^\Imc$, with $\bot^\Imc = \emptyset$, 
$\top^\mathcal{I} = \Delta^\mathcal{I}$, and $\{a\}^\Imc = a^\Imc$. 
This function extends to any $\mathcal{EL}^{++}$-concepts as follows:
\[
    (C\sqcap D)^\mathcal{I} = C^\mathcal{I}\cap D^\mathcal{I},\quad
    (\exists r.C)^\mathcal{I} = \left\{a \in \Delta^\mathcal{I} \mid \exists b \in C^\mathcal{I}: (a,b) \in r^\mathcal{I}\right\},
\]

An interpretation $\mathcal{I}$ \emph{satisfies} a TBox axiom $X\sqsubseteq Y$ if $X^\Imc \subseteq Y^\Imc$ for $X,Y$ being two concepts or two role names, or $X$ being a role chain and $Y$ being a role name. It satisfies an ABox axiom $A(a)$ if $a^\Imc \in A^\Imc$ and it satisfies $r(a,b)$ if $(a^\Imc, b^\Imc) \in r^\Imc$. Finally,  $\mathcal{I}$ is a \emph{model} of $\Omc$ if it satisfies every axiom in $\Omc$. An ontology $\Omc$ \emph{entails} an axiom $\alpha$, denoted $\Omc\models\alpha$, if $\alpha$ is satisfied by all models of $\Omc$.

\subsection{Hyperbolic Space}
A $d$-dimensional \emph{manifold} \cite{Lee_2013}, denoted $\mathcal{M}$, 
can be regarded as a hypersurface embedded in a higher $n$-dimensional Euclidean space $\mathbb{R}^n$, 
which is locally equivalent to $\mathbb{R}^d$ around each point $\x\in \mathcal{M}$. 
A \emph{Riemannian manifold} $\mathcal{M}$ is a manifold equipped with a Riemannian metric, enabling the definition of a distance function $d_\mathcal{M}(\mathbf{x}, \mathbf{y})$ for $\mathbf{x}, \mathbf{y} \in \mathcal{M}$.  
\textit{Hyperbolic space}, denoted $\mathbb{H}^n$, is a Riemannian manifold with a constant negative curvature of $-\kappa$ ($\kappa > 0$) \cite{lee2006riemannian}, which can be represented  by the Poincaré ball model whose points are defined by a ``ball'' with radius $1 /\sqrt{\kappa}$:
\(
B^n = \{ \x \in \mathbb{R}^n : \|\x\| < 1 / \sqrt{\kappa} \},
\) 
and the hyperbolic distance between $\x, \y \in B^n$ are defined as
\begin{equation}\label{eq:hyperbolic_distance}
d_\kappa(\x, \y) = \frac{1}{\sqrt{\kappa}} \, \text{arcosh} \left( 1 + \frac{2 \kappa \|\x - \y\|^2}{(1 - \kappa\|\x\|^2)(1 - \kappa\|\y\|^2)} \right).
\end{equation}
% add the definition of scaling product over hyperbolic space
In the Poincaré ball model, the scaling $k\in R$ of a point $\x \in B^n$ is defined as
\begin{equation}\label{eq:scaling_product}
k \odot \x = \tanh(k \cdot \tanh^{-1}(||\x||)) \cdot \frac{\x}{||\x||}
\end{equation}

\section{Methodology}
In this section, we present our method, \ontr, for embedding a given $\mathcal{EL}$-ontology.  
\ontr consists mainly of three parts: 
\begin{enumerate}[leftmargin=*]
    \item Embedding any $\mathcal{EL}$-concepts (atomic or complex ones) as points in hyperbolic spaces using PLMs and verbalizations (Section~\ref{sec:concept_embedding});
    \item Embedding roles as rotations over hyperbolic spaces (Section~\ref{sec:role_embedding}), which allows \ontr for capturing the logical structures of existential qualifications $\exists r$ (Proposition \ref{prop:distance-preserving}) and demonstrates improved performance as evidenced by evaluations on real-world ontologies; 
    \item Training the embeddings using the Poincaré ball model (Section~\ref{sec:training}) by regarding the axioms as hierarchical relationships between complex concepts. 
\end{enumerate}
%By utilizing role embeddings,  \ontr can capture the logical structures of existential qualifications $\exists r.$ (Proposition \ref{prop:distance-preserving}) and demonstrates improved performance as evidenced by evaluations on real-world ontologies .
It is worth noting that the role embedding component can be omitted to yield a simplified variant, referred to as \ont.

\subsection{Verbalisation-based Concept Embedding}\label{sec:concept_embedding}

Given an ontology $\mathcal{O}$, we assume that each atomic concept $A$ and 
role $r$ occurring in $\mathcal{O}$ is associated with a textual description, 
typically its name or definition, denoted as $\mathcal{V}(A)$ and $\mathcal{V}(r)$, 
respectively. For instance, we may have $\mathcal{V}(A) = \text{``Father''}$ and
$\mathcal{V}(r) = \text{``is parent of''}$.

Based on these descriptions of atomic concepts and roles, we systematically 
generate a natural language description for each complex concept $C$ appearing 
in the ontology $\mathcal{O}$, denoted as $\mathcal{V}(C)$. 
For $\mathcal{EL}$-ontologies, we generate these descriptions according to the 
following compositional rules:
\[
\mathcal{V}(C\sqcap D) = ``\mathcal{V}(C) \text{ and } \mathcal{V}(D)",\quad 
\mathcal{V}(\exists r. C) = `` \text{ something that }\mathcal{V}(r) \text{ some } \mathcal{V}(C)".
\]
For example, we will have
\(\mathcal{V}(\text{Person}\sqcap\text{Student}) = \text{``person and student''}\), and \(\mathcal{V}(\exists \text{isParentOf}.\text{Person}) = \text{``something that is parent of some person''}\). 

With the verbalization approach described above, we can embed any complex $\mathcal{EL}$-concept $C$ by applying 
language models to its textual description $\mathcal{V}(C)$ and mapping the result to a point in hyperbolic space as in HiT \cite{hit}. Specifically, this is achieved by encoding sentences using a BERT model with mean pooling, after which the resulting embeddings are re-trained in hyperbolic space.
The final embedding of $C$ is denoted as $\HiT{C}$. 

\subsection{Logic-aware Role Embedding}\label{sec:role_embedding}

In the above verbalization process, the role semantics is integrated into the concept verbalizations. 
However, this do not provide individual roles embeddings, which could restrict the capability of
handling logical patterns involving roles and impairing the reasoning. 
For instance, we can not guarantee the preservation of deductive patterns such as the monotonicity of existential restrictions: 
if $A\sqsubseteq B$, then $\exists r. A\sqsubseteq \exists r. B$.

To address these limitations, we propose to explicitly incorporate role embeddings by interpreting a role $r$ as 
a function $f_r$ over hyperbolic space.  In details, for each complex concept of the form $\exists r. D$, \ontr introduces an alternative representation:
\(f_{r}(\HiT{D})\), 
which complements the verbalization-based embedding $\HiT{\exists r. D}$. 
Here, $f_r$ is a role-specific transformation function. We will encourage the two embeddings \(f_{r}(\HiT{D})\) and $\HiT{\exists r. D}$ 
to be identical by introducing an extra loss term in the training process.

\begin{figure}[t]
    \centering
    \begin{tikzpicture}[scale=1.8]
        % Apply rotation to the entire picture
        \begin{scope}[rotate=-30]
            % Create a shading for the Poincaré ball to show distance distortion
            \shade[inner color=white!00, outer color=black!70, opacity=0.7] (0,0) circle (1.2);
            
            % Draw original point D
            \fill[blue] (0.3,0.5) circle (0.03);
            
            % Draw the rotated point X_C
            \fill[red] (-0.7,0.42) circle (0.03);
            
            % Add the verbalized point for \HiT{\exists r. D}
            \fill[blue] (-0.8,0.3) circle (0.03);
            
            % Origin point
            \fill[black] (0,0) circle (0.02);
            
            % Add some hyperbolic geodesics (curved lines) to show structure
            \draw[dashed, gray] (0,0) -- (0.3,0.5);
            \draw[dashed, gray] (0,0) -- (-0.5,0.3);
            \draw[dashed, red!70!black, decoration={markings, mark=at position 0.5 with {\arrow{>}}}, postaction={decorate}] (-0.5,0.3) -- (-0.7,0.42);
            
            % Draw a curved arc to illustrate the rotation angle (adjusted position)
            \draw[dashed, red!70!black, decoration={markings, mark=at position 0.5 with {\arrow{>}}}, postaction={decorate}] (0.3,0.5) arc (60:150:0.583);
        \end{scope}
        
        % Text labels are placed outside the scope to prevent rotation
        \node at (0,-0.7) {Poincaré Ball $B^2$};
        
        % Calculate rotated coordinates for text placement
        % For a point (x,y) rotated -30 degrees, new coords are (x*cos(-30)-y*sin(-30), x*sin(-30)+y*cos(-30))
        \node[blue, right, font=\large] at (0.55, 0.3) {$\HiT{D}$};
        \node[red!70!black, above, font=\large] at (-0.3, 0.75) {$f_r(\HiT{D})$};
        \node[blue, below left, font=\large] at (-0.4, 0.7) {$\HiT{\exists r. D}$};
        \node[below, font=\large] at (0,0) {$O$};
        
        % Label for the scaling arrow
        \node[red!70!black, sloped, font=\large] at (-0.22, 0.65) {$k_r$};
        
        % Rotation angle label
        \node[red!70!black, font=\large] at (0.1, 0.44) {$\theta_r$};
        
        % Add rotation label
        % \node[blue, text width=2cm] at (-0.1,-0.4) {$f_r = R(\Theta_r)$};
    \end{tikzpicture}
    \caption{Illustration of $f_r$ in a two-dimensional hyperbolic space.}
    \label{fig:poincare-rotation}
\end{figure}
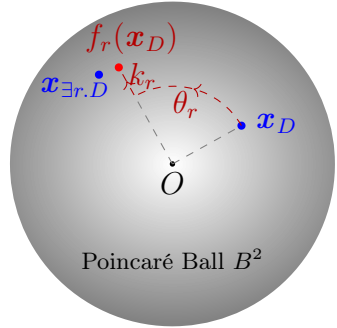

In our implementation, we define $f_r$ as a composition of rotations and scaling 
operations in hyperbolic space. Specifically, the $f_r$ is defined by 
(see Fig. \ref{fig:poincare-rotation} for an illustration):
\begin{equation}\label{eq:fr}
f_r(\mathbf{v}) = k_r \odot (R(\Theta_r) \cdot \mathbf{v}),
\end{equation}
where $k_r \in \R$ is a role-specific scaling factor, 
$\Theta_r = (\theta_{r}^1, \theta_{r}^2, \ldots, \theta_{r}^m)\in \mathbb{R}^m$ 
is a role-specific rotation angle, and $\mathbf{v} \in \mathbb{H}^{2m}$ is 
a point in hyperbolic space. 
Here, the $\odot$ operation represents the scaling product over 
hyperbolic space $\mathbb{H}^{2m}$, 
which ensures the scaled embeddings are still in hyperbolic space. 
However, for rotations, we could directly apply the same rotations as the Euclidean space as we use the Poincaré ball models 
for the representation of hyperbolic space. Specifically, we use the rotation matrix 
$R(\Theta_r) \in \mathbb{R}^{2m \times 2m}$ over the space $\mathbb{H}^{2m}$ defined as a product of two-dimensional rotations 
of the following form:
\[
R(\Theta_r) = \begin{bmatrix}
    R(\theta_{r}^1) & \textbf{0} & \cdots & \textbf{0} \\
    \textbf{0} & R(\theta_{r}^2) & \cdots & \textbf{0} \\
    \vdots & \vdots & \ddots & \vdots \\
    \textbf{0} & \textbf{0} & \cdots & R(\theta_{r}^m)
\end{bmatrix},  
\text{ where }
R(\theta_r) = \begin{bmatrix}
\cos(\theta_r) & -\sin(\theta_r) \\
\sin(\theta_r) & \cos(\theta_r)
\end{bmatrix}.
\]

\begin{figure}[t]
  \centering
  \begin{subfigure}[b]{0.4\textwidth}
    \includegraphics[width=0.8\textwidth]{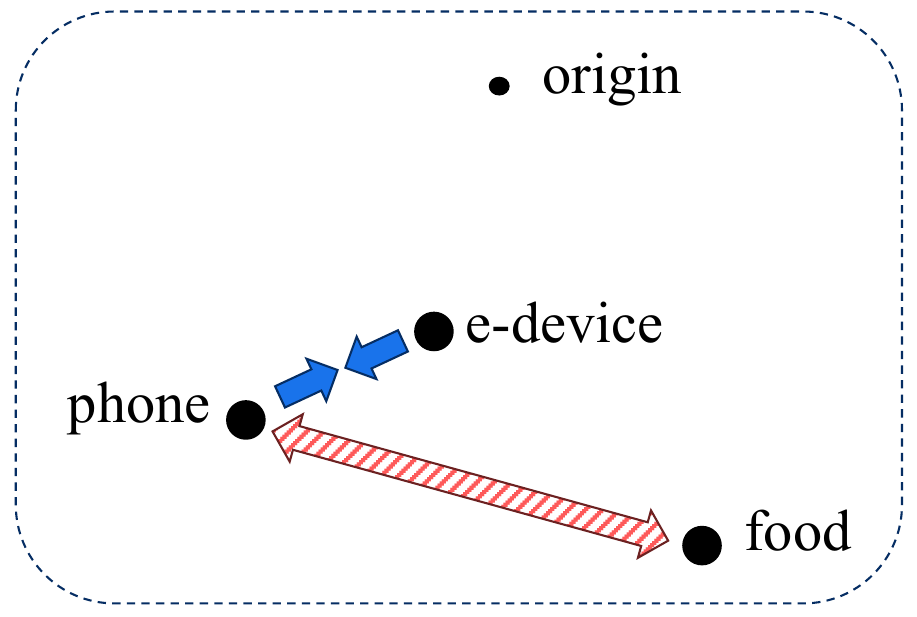}
    \caption{Hierarchical Contrastive Loss}
    \label{fig:loss1}
  \end{subfigure}
  \ \ \ 
  \begin{subfigure}[b]{0.4\textwidth}
    \includegraphics[width=0.8\textwidth]{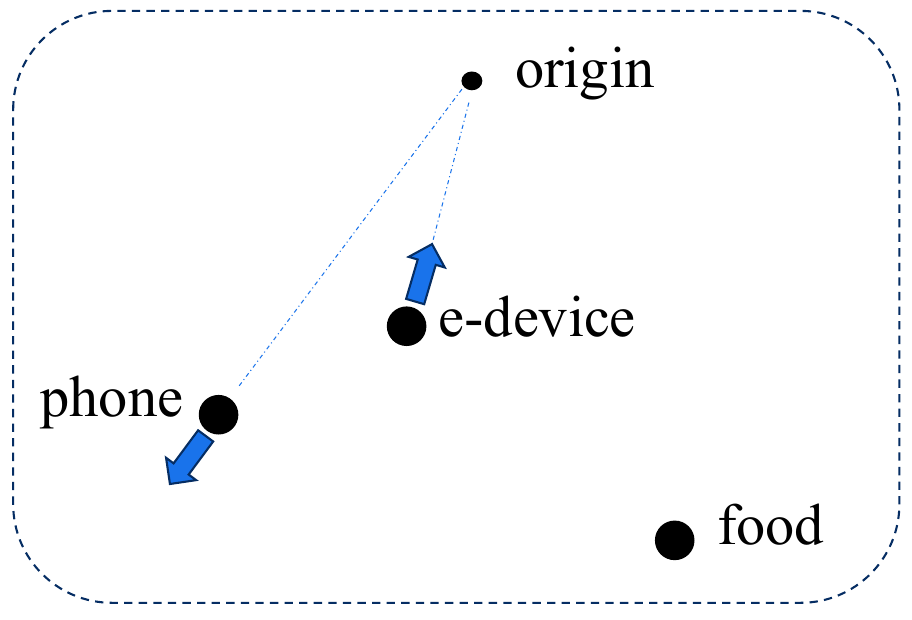}
    \caption{Centripetal Loss}
    \label{fig:loss2}
  \end{subfigure}
  \caption{Illustration of impact of hierarchy Loss $\mathcal{L}_{\prec}$ during
training.}
  \label{fig:loss}
\end{figure}

\subsection{Training}\label{sec:training}

\subsubsection{Hierarchy Loss} 
We interpret subsumption axioms $C\sqsubseteq D$ in the ontology $\mathcal{O}$ as partial-order relationships between their embeddings: 
\(\HiT{C} \prec \HiT{D}\). Then, following the approach of HiT \cite{hit}, we encode these partial-order relationships using a 
Poincaré embedding model \cite{poincare} using a hierarchical loss defined by the hyperbolic distance. 
% The training and inference procedures for this model are detailed below.
The loss function $\mathcal{L}_{\prec}(\HiT{C}\prec \HiT{D})$ consists of two parts:
\begin{enumerate}[leftmargin=*]
    \item \textit{Hierarchical Contrastive Loss}: This loss encourages embeddings of related concepts to be closer to each other than to negative samples:
    \[
    \mathcal{L}_{\textit{contrast}}(\HiT{C}\prec \HiT{D}) = 
    \max(0, d_\kappa(\HiT{C}, \HiT{D}) - d_\kappa(\HiT{C}, \HiT{D_{\text{neg}}}) + \alpha),
    \] 
    where $D_{\text{neg}}$ represents a randomly sampled concept that composes a negative example with $C$ and $\alpha$ is a margin hyperparameter.

    \item \textit{Centripetal Loss}: This loss enforces that parent concepts are embedded closer to the origin than their children in the hyperbolic space. 
    Let $\|\textbf{\textit{x}}\|_\kappa$ denote the hyperbolic distance from a point $\textbf{\textit{x}}\in \mathbb{H}^n$ to the origin (also known as the hyperbolic norm). The centripetal loss is defined as:
    \[
    \mathcal{L}_{\textit{centri}}(\HiT{C}\prec \HiT{D}) = \max(0, \|\HiT{D}\|_\kappa - \|\HiT{C}\|_\kappa + \beta),
    \]
    where $\beta$ is a margin hyperparameter. This constraint geometrically reinforces the hierarchical structure by positioning more general concepts (parents) closer to the center of the hyperbolic space.
\end{enumerate}
The overall hierarchy loss is defined as the sum of these two loss components:
\begin{equation}\label{eq:training_loss}
\mathcal{L}_{\prec}(\HiT{C}\prec \HiT{D}) = \mathcal{L}_{\textit{contrast}}(\HiT{C}\prec \HiT{D}) + \mathcal{L}_{\textit{centri}}(\HiT{C}\prec \HiT{D}).
\end{equation}

The effect of the hierarchy loss on embedding updates is illustrated in Figure~\ref{fig:loss}, where the positive pair is \( C = \textit{"phone"} \) and \( D = \textit{"e-device"} \), and the negative example is \( D_{\text{neg}} = \textit{"food"} \). The contrastive loss encourages the embeddings of \( C \) and \( D \) to be close, while pushing \( C \) and \( D_{\text{neg}} \) farther apart, as shown in Figure~\ref{fig:loss1}. On the other hand, in in Figure~\ref{fig:loss2}, the centripetal loss pulls the parent concept \( D \) toward the origin, while pushing the child concept \( C \) away from it.

\subsubsection{Loss for role embeddings} The loss for role embeddings aims to align the embeddings 
of $\HiT{\exists r. D}$ with $f_{r}(\HiT{D})$. However, as shown by our 
preliminary experiments, it is not a good choice to directly align the 
embeddings such as introducing a loss defined by their Euclidean distance or hyperbolic, 
i.e., \(||\HiT{\exists r. D} - f_{r}(\HiT{D})||\) or \(d_\kappa(\HiT{\exists r. D}, f_{r}(\HiT{D}))\). 
Instead, we would reuse the hierarchical loss above by interpreting the 
equivalence $\HiT{\exists r. D}\equiv f_{r}(\HiT{D})$ as two partial-order $\HiT{\exists r. D}\prec f_{r}(\HiT{D})$ and $f_{r}(\HiT{D})\prec \HiT{\exists r. D}$. 
Formally, the loss is defined as:
\begin{equation}\label{eq:loss_rotation}
\Lmc_{r}(\exists r. D) =
\frac{1}{2} \Big( \Lmc_{\prec}\big(\HiT{\exists r. D}\prec  f_{r}(\HiT{D})\big) + \Lmc_{\prec}\big(f_{r}(\HiT{D})\prec \HiT{\exists r. D}\big)\Big)
\end{equation}

\subsubsection{Loss for conjunction} This loss is introduced to capture the logical properties of the conjunction $\sqcap$, specifically,  a universally valid axiom $C\sqcap D\sqsubseteq C$. 
It is enough to use the following loss based on the hierarchy loss $\Lmc_{\prec}$: 
\begin{equation}\label{eq:loss_conjunction}
\Lmc_{\sqcap}(C\sqcap D) = \frac{1}{2}\Big(\Lmc_{\prec}(\HiT{C\sqcap D}\prec \HiT{C}) + \Lmc_{\prec}(\HiT{C\sqcap D}\prec \HiT{D})\Big).
\end{equation}

\subsubsection{Training} The final train loss is defined as the sum of the losses defined by Equations 
\eqref{eq:training_loss}, \eqref{eq:loss_rotation}, and \eqref{eq:loss_conjunction} for all axioms $C\sqsubseteq D$, concept $\exists r. D$, and conjunctions $C\sqcap D$ appeared  $\Omc$, respectively.

Finally, with a well-trained \ontr model, we evaluate a new axiom $C\sqsubseteq D$ using the following score with a higher value indicates a higher confidence of the given axioms,  which is defined as a weighted sum of distances:
\begin{equation}\label{eq:scoring}
    s(C\sqsubseteq D) \equiv s(\HiT{C}\prec \HiT{D}) := -(d_\kappa(\HiT{C}, \HiT{D}) + \lambda (\|\HiT{D}\|_\kappa - \|\HiT{C}\|_\kappa))
\end{equation}
where the weight $\lambda$ is determined based on the model's performance on the validation set, higher scores indicate a stronger predicted subsumption relationship between concepts.

We have the following proposition that allows us to control the difference between scores $s(f_r(\HiT{C})\prec f_r(\HiT{D}))$ and $s(\HiT{C}\prec \HiT{D})$ using the scaling factor $k_r$, and thus, capturing the deductive pattern 
$A\sqsubseteq B \Rightarrow \exists r. A\sqsubseteq \exists r. B$.

\begin{proposition}\label{prop:distance-preserving}
    For any $\x, \y\in \mathbb{H}^{2m}$ and rotation matrix $R(\Theta_r)\in \mathbb{R}^{2m\times 2m}$ as defined in Equation~\eqref{eq:fr}, 
    we have $\|\HiT{C}\|_\kappa = \|R(\Theta_r)\cdot \HiT{C}\|_\kappa$ and $d_\kappa(\x, \y) = d_\kappa(R(\Theta_r)\cdot \x, R(\Theta_r)\cdot \y)$. Moreover, we have $s(f_r(\HiT{C})\prec f_r(\HiT{D}))=s(\HiT{C}\prec \HiT{D})$ when $k_r=1$.
\end{proposition}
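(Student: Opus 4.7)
The plan is to reduce everything to the observation that $R(\Theta_r)$ is an orthogonal matrix, and to combine this with the explicit formula for hyperbolic distance in Equation~\eqref{eq:hyperbolic_distance}. First I would note that each two-dimensional block $R(\theta_r^i)$ is orthogonal (being a rotation in the plane), so the block-diagonal matrix $R(\Theta_r)$ is orthogonal as well. This gives the two basic Euclidean identities $\|R(\Theta_r)\x\| = \|\x\|$ and $\|R(\Theta_r)\x - R(\Theta_r)\y\| = \|R(\Theta_r)(\x-\y)\| = \|\x-\y\|$ for all $\x,\y\in\mathbb{R}^{2m}$.

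Next I would substitute these identities directly into Equation~\eqref{eq:hyperbolic_distance}. Every term on the right-hand side ($\|\x\|$, $\|\y\|$, and $\|\x-\y\|$) is invariant under multiplication by $R(\Theta_r)$, so
\begin{equation*}
d_\kappa(R(\Theta_r)\x,\,R(\Theta_r)\y) \;=\; d_\kappa(\x,\y).
\end{equation*}
In particular, taking $\y = \mathbf{0}$ and using $R(\Theta_r)\cdot \mathbf{0} = \mathbf{0}$ yields $\|R(\Theta_r)\x\|_\kappa = d_\kappa(R(\Theta_r)\x,\mathbf{0}) = d_\kappa(\x,\mathbf{0}) = \|\x\|_\kappa$, establishing the hyperbolic-norm invariance. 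A small point to verify along the way is that $R(\Theta_r)\x$ still lies in the Poincar\'e ball $B^{2m}$, which follows immediately from the Euclidean-norm invariance.

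For the third claim I would first observe that, by Equation~\eqref{eq:scaling_product}, setting $k_r = 1$ gives
\begin{equation*}
1\odot \x \;=\; \tanh\bigl(\tanh^{-1}(\|\x\|)\bigr)\cdot \tfrac{\x}{\|\x\|} \;=\; \x,
\end{equation*}
so $f_r(\x) = R(\Theta_r)\cdot \x$ reduces to an isometry of the Poincar\'e ball. Plugging this into the scoring function~\eqref{eq:scoring} and applying the two invariances just established, both the distance term $d_\kappa(\HiT{C},\HiT{D})$ and the norm-difference term $\|\HiT{D}\|_\kappa - \|\HiT{C}\|_\kappa$ are preserved, so $s(f_r(\HiT{C})\prec f_r(\HiT{D})) = s(\HiT{C}\prec \HiT{D})$.

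There is no real obstacle here: the statement essentially packages the fact that the Poincar\'e ball model represents hyperbolic isometries that fix the origin as Euclidean rotations, and the scaling factor $k_r=1$ degenerates the M\"obius scaling to the identity. The only minor care needed is to recognize that $R(\Theta_r)$ is block-diagonal with orthogonal blocks and therefore orthogonal itself, and to confirm the $k_r=1$ degeneracy via the explicit formula in~\eqref{eq:scaling_product}.
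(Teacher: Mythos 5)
Your proposal is correct and follows essentially the same route as the paper's proof: orthogonality of the block-diagonal rotation preserves the Euclidean norms appearing in Equation~\eqref{eq:hyperbolic_distance}, hence the hyperbolic distance and norm, and the $k_r=1$ case reduces $f_r$ to the rotation alone so the score is preserved. Your write-up is in fact slightly more careful than the paper's, which omits mention of the $\|\x-\y\|$ term and of the explicit check that $1\odot\x=\x$.
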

\begin{proof}
The hyperbolic distance $d_\kappa(\x, \y)$ depends only on the Euclidean norms $\|\x\|$ and $\|\y\|$, as per Equation~\eqref{eq:hyperbolic_distance}. Since the rotation $R(\Theta_r)$ preserves Euclidean norms, it follows that $\|R(\Theta_r) \cdot \mathbf{z}\| = \|\mathbf{z}\|$ for $\mathbf{z} = \x, \y$. Therefore,
\(
d_\kappa(\x, \y) = d_\kappa(R(\Theta_r) \cdot \x, R(\Theta_r) \cdot \y).
\)
By definition, we have $\|\x\|_\kappa = d_\kappa(\x, \mathbf{0})$. Applying this with $\y = \mathbf{0}$, we obtain:
\(
\|\x\|_\kappa = \|R(\Theta_r) \cdot \x\|_\kappa.
\)

Since the score is defined by $d_\kappa(\x, \y)$, $\|\x\|_\kappa$, and $\|\y\|_\kappa$, and given $f_r(\mathbf{z}) = R(\Theta_r) \cdot \mathbf{z}$ when $k_r = 1$, we conclude that $s(f_r(\HiT{C}) \prec f_r(\HiT{D})) = s(\HiT{C} \prec \HiT{D})$ when $k_r = 1$. This completes the proof.
\end{proof}

\section{Evaluation}

\subsection{Experiment Setting}

The evaluation is mainly concentrated on two tasks: 
axiom prediction (Section~\ref{sec:prediction}) and inference (Section~\ref{sec:inference}). 
The prediction and inference tasks focus on identifying missing axioms; however, the prediction task addresses arbitrary axioms, while the inference task focuses specifically on axioms that can be logically derived from the given ontologies.
We also evaluate the performance of our method in different scenarios such as transfer learning, ablation study, and over real cases in Section~\ref{sec:case_study}.

\urldef{\mowlURL}\url{https://mowl.readthedocs.io/en/latest/_modules/mowl/ontology/normalize.html#ELNormalizer}

\noindent
\textbf{Datasets}  
We adopt three real-world ontologies --- 
GALEN \cite{rector1996galen}, 
the Gene Ontology (GO) \cite{ashburner2000gene}, 
and Anatomy (Uberon) \cite{mungall2012uberon}. Following the prior research 
\cite{DBLP:conf/www/JackermeierC024,yang2025transbox}, we keep only the $\mathcal{EL}$ part and use their normalized  versions. 
For the prediction task, the training, validation, and testing data are generated by a random 80/10/10 split of the ontology axioms. 
For the inference task, we use the whole ontology as the training data, and all the inferred axioms of NF1 as the testing data, 
and 1000 randomly selected inferred NF1 subsumptions as validation data. 
The data statistics are shown in Table~\ref{tab:dataset-splits}.
Note that we developed our own ontology normalization implementation 
rather than using the existing implementation in ELEM \cite{kulmanov_embeddings_2019}, mOWL \cite{zhapa2023mowl}, and DeepOnto \cite{he2024deeponto} as it (1) does not name the concepts introduced during normalization, and (2) sometimes produces logically inconsistent axioms\footnote{For example, 
among the normalized axioms of GALEN, we find the axiom \(\exists_{\textit{hasQuantity}}\ \textit{BNFSection13\_3}\sqsubseteq \textit{Tobacco}\), which contradicts the original ontology where \textit{BNFSection13\_3} appears only in \(\textit{BNFSection13\_3} \sqsubseteq \textit{BNFChapter13Section}\).} due to some bug in calling the jcel normalizer.

\begin{table}[t]
    \centering
    \caption{Normalized Dataset Statistics (Train/Val/Test for prediction task).}
    \label{tab:dataset-splits}
    \begin{tabular}{lrrr}
    \toprule
    \textbf{Axioms} & \textbf{GALEN} & \textbf{GO} & \textbf{ANATOMY} \\
    \midrule
    \textbf{NF1} & 25,610/3,200/3,203 & 116,751/14,593/14,596 & 41,764/5,220/5,222 \\
    \textbf{NF2} & 11,679/1,459/1,462 & 24,097/3,011/3,014 & 12,336/1,542/1,543 \\
    \textbf{NF3} & 25,299/3,161/3,165 & 238,899/29,861/29,865 & 39,766/4,970/4,972 \\
    \textbf{NF4} & 6,287/785/788 & 81,948/10,243/10,245 & 7,586/947/951 \\
    \midrule
    \textbf{Total} & 68,875/8,605/8,618 & 461,695/57,708/57,720 & 101,452/12,679/12,688 \\
    \midrule
    \textbf{Inferred(NF1)} & 335,002& 1,184,380& 225,330 \\
    \bottomrule
    \end{tabular}
    \end{table}

\noindent
\textbf{Baselines} 
Our study systematically compares our proposed methods with 
established approaches that provide general ontology embeddings, 
 with particular emphasis on geometric embedding methods, including 
 Box$^2$EL~\cite{DBLP:conf/www/JackermeierC024}, BoxEL~\cite{xiong_faithiful_2022}, 
 TransBox~\cite{yang2025transbox}, ELBE~\cite{peng_description_2022}, and ELEM~\cite{kulmanov_embeddings_2019}. 
We exclude catE and FALCON as catE cannot handle unseen complex concepts that appear in our experimental settings, and FALCON's implementation is not publicly available. 
Additionally, we benchmark against HiT~\cite{hit}, 
a language model-based method limited to taxonomic structures (i.e., NF1 axioms), and 
two classic none contextual word embedding-based methods --- OPA2Vec \cite{DBLP:journals/bioinformatics/SmailiGH19} and OWL2Vec* \cite{DBLP:journals/ml/ChenHJHAH21}. 
We also include a simplified version of \ontr, denoted as \ont, which omits role embeddings and is trained using only the loss of Eq.~\ref{eq:training_loss}. 
We ignored other PLM fine-tuning-based methods like BERTSub \cite{DBLP:journals/www/ChenHGJDH23} whose embeddings are coupled to a task-specific layer without generality towards different tasks.

\noindent
\textbf{Evaluation Metrics} 
Consistent with the established literature~\cite{DBLP:conf/www/JackermeierC024,kulmanov_embeddings_2019,peng_description_2022,xiong_faithiful_2022,yang2025transbox}, 
we evaluate ontology embedding performance using various 
ranking-based metrics on the testing set. 
We rank 
candidates according to the score function defined in 
Eq.~\ref{eq:scoring}, where higher scores indicate more probable candidates. 
% Based on axiom types, we evaluate four distinct prediction tasks corresponding to normalized axiom forms: 
% $A\sqsubseteq ?B$ (NF1), $A_1\sqcap A_2\sqsubseteq ?B$ (NF2), $?A\sqsubseteq \exists r. B$ (NF3), and $\exists r. A\sqsubseteq ?B$ (NF4).
%
To comprehensively assess different methods, we track the rank of correct answers and report performance through several standard metrics: Hits@k (H@k) for \( k \in \{1, 10, 100\} \), mean reciprocal rank (MRR), and mean rank (MR). % median rank (Med), and area under the ROC curve (AUC).

\noindent
\textbf{Experimental Protocol} 
We mainly use all-MiniLM-L12-v2 (33.4M) as the underlying language model for \ontr and HiT.  The influence of different language models is presented in Section \ref{sec:ablation_study}. 
We trained  \ontr and HiT for 1 epoch, and with 1 negative sample for each given axiom, as we found that it would be enough to get good performance in the pre-test. 
The embedding vectors for each concept is obtained by performing an average pooling over features of the final layer 
of the language model. For obtaining the vector for $\Theta(r), k_r$ for a given role $r$, we apply an extra linear transformation on the 
embedding of $r$. The margins $\alpha, \beta$ and learning rate $\gamma$ are fixed as default in HiT as $3.0, 0.5, 10^{-5}$, respectively. The weight $\lambda\in\{0, 0.1, \ldots, 1\}$ of the score function in Equation \ref{eq:scoring} is selected based on the best performance on the validation set.

OWL2Vec* and OPA2Vec utilize fine-tuned word embeddings (\url{https://tinyurl.com/word2vec-model})  with the Random Forest classifier for superior performance, except for GO ontology inference tasks, where Logistic Regression is employed due to computational constraints. 
Due to dataset modifications, we also re-implemented all the other geometric embedding models (BoxEL, TransBox, ELBE and ELEM) based on the framework developed by Box$^2$EL \cite{DBLP:conf/www/JackermeierC024} and TransBox \cite{yang2025transbox}. For our implementation, we utilized embedding dimensions $d=200$, explored margin values $\gamma \in \{0, 0.05, 0.1, 0.15\}$ and learning rates $l_r \in \{0.0005, 0.005, 0.01\}$, and trained each model for 5,000 epochs. Optimal hyperparameters were selected based on validation set performance.

\subsection{Prediction Task}\label{sec:prediction}
\begin{table}[t]
    \centering
    \caption{Overall performance of the prediction task across datasets. Values for H@k and MRR are percentages. $k=1/10/100$ for H@k.}
    \label{tab:all_results_pred}
    \begin{tabular}{lccccccccc}
    \toprule
    & \multicolumn{3}{c}{\textbf{GALEN}} & \multicolumn{3}{c}{\textbf{GO}} & \multicolumn{3}{c}{\textbf{ANATOMY}} \\
    \cmidrule(lr){2-4}\cmidrule(lr){5-7}\cmidrule(lr){8-10}
    \textbf{Method} & \textbf{H@k} & \textbf{MRR} & \textbf{MR} & \textbf{H@k} & \textbf{MRR} & \textbf{MR} & \textbf{H@k} & \textbf{MRR} & \textbf{MR} \\
    \midrule
    ELEM & 14/\textbf{50}/68 & 26 & 2,715 & 4/35/68 & 14 & 11,764 & 10/53/78 & 24 & 1,588 \\
    ELBE & 9/37/55 & 18 & 4,661 & 10/30/43 & 18 & 10,236 & 9/41/66 & 20 & 2,672 \\
    BoxEL & 0/0/2 & 0 & 13,824 & 0/0/2 & 0 & 65,846 & 1/2/4 & 2 & 12,257 \\
    Box$^2$EL & 12/38/58 & 21 & 4,593 & 8/43/64 & 19 & 7,975 & 11/39/65 & 20 & 2,828 \\
    TransBox & 11/41/62 & 22 & 2,972 & 8/43/67 & 19 & 7,092 & 9/49/73 & 22 & 1,299 \\
    \hline
     OPA2Vec& 0/1/4& 1& 13,547& 0/1/4 & 0 & 18,493&0/5/17& 2 & 9,537\\
    OWL2Vec*& 0/1/5 & 1 & 13,660 & 0/0/2 & 0 & 19,523 & 0/3/11 & 2 & 10,309 \\
    \hit & \textbf{25}/47/62 & 33 & 2,349 & 36/60/73 & 44 & 15,080 & 19/54/78 & 31 & 722 \\
    \midrule
    \ont & \textbf{26}/46/64 & 33 & 1,546 & \textbf{38}/66/79 & \textbf{48} & 2,209 & \textbf{22}/52/79 & 31 & 628 \\
    \ontr & \textbf{25}/\textbf{50}/\textbf{69} & \textbf{34} & \textbf{792} & 37/\textbf{67}/\textbf{81} & 46 & \textbf{1,121} & \textbf{22}/\textbf{57}/\textbf{82} & \textbf{33} & \textbf{475} \\
    \bottomrule
    \end{tabular}
\end{table}

The comprehensive evaluation results are presented in Table~\ref{tab:all_results_pred} for GALEN, GO, and Anatomy, respectively. Our method \ontr consistently outperforms existing approaches across all datasets. While some geometric model-based methods achieve comparable performance in terms of H@$k$, such as ELEM in the GALEN dataset, they typically exhibit substantially lower average performance, which is evidenced by the significant gap between MRR and MR values. For instance, the best-performing geometric-based method on GO, Transbox, yielded MR values approximately 7 times worse and MRR values twice as poor as \ontr. This indicates that \ontr has overall fewer extreme worst cases (i.e., correct answers with extremely large ranks) and also better cases (i.e., lower rankings). This phenomenon occurs consistently across all three ontologies.

For language model-based methods, we observe that OPA2Vec and OWL2Vec* demonstrate limited performance, which is reasonable given their reliance on word embeddings and random forest %logistic regression 
classifiers for capturing subsumptions. 
It is important to note that, in our evaluation, ranking is performed over all atomic concepts as candidates---unlike the original OWL2Vec settings \cite{DBLP:journals/ml/ChenHJHAH21}, which consider only around 50 candidates---making metrics such as Hits@k not directly comparable. 
In contrast, by employing more advanced BERT-based language models and geometric embeddings based on hyperbolic spaces, HiT achieved significantly better performance. By further incorporating the logical constraints of complex concepts, our method \ontr outperformed HiT, especially in terms of average performance as indicated by MR values. 
Specifically, \ontr achieved approximately 14 times better MR values than HiT in the GO dataset, suggesting that  \ontr could more effectively avoid extremely poor cases while also improving performance on other metrics such as H@$k$ and MRR. 
Moreover, we can see that, in most cases, adding role embeddings and extra loss for logical constraints of $\exists$  and $\sqcap$ lead to better performance by comparing the \ont and \ontr.

\begin{table}[t]
    \centering
    \caption{Performance of the inference task across datasets. Values for H@k and MRR are percentages. $k=1/10/100$ for H@k.}
    \label{tab:inference_lms}
    \begin{tabular}{lccccccccc}
    \toprule
    & \multicolumn{3}{c}{\textbf{GALEN}} & \multicolumn{3}{c}{\textbf{GO}} & \multicolumn{3}{c}{\textbf{ANATOMY}} \\
    \cmidrule(lr){2-4}\cmidrule(lr){5-7}\cmidrule(lr){8-10}
    \textbf{Method} & \textbf{H@k} & \textbf{MRR} & \textbf{MR} & \textbf{H@k} & \textbf{MRR} & \textbf{MR} & \textbf{H@k} & \textbf{MRR} & \textbf{MR} \\
    \midrule
    ELEM & 0/3/9 & 1 & 8,639 & 0/3/17 & 1 & 18,377 & 0/4/22 & 2 & 4,990 \\
    ELBE & 0/4/20 & \textbf{2} & 2,999 & 0/3/15 & 1 & 4,021 & 0/6/39 & 2 & 979 \\
    BoxEL & 0/0/3 & 0 & 11,328 & 0/0/0 & 0 & 18,186 & 0/0/0 & 0 & 8,169 \\
    Box$^2$EL & 0/3/15 & 1 & 5,530 & 0/1/7 & 1 & 11,801 & 0/1/7 & 1 & 11,801 \\
    TransBox & 0/2/6 & 1 & 7,111& 0/2/9&1& 4,449& 0/5/27 & 2 & 749\\
    \midrule
     OPA2Vec & 0/0/1 &0&12,722&3/5/6& 3& 95,755&\textbf{2}/6/15&\textbf{3}&5,143\\
    OWL2Vec* &  0/0/1&0&12,647& 3/7/8 &  \textbf{5} & 88,614&1/5/14&\textbf{3}&5,441\\
    HiT& 0/4/26& \textbf{2} & 953 & 0/1/4&0&44,253 & 0/6/\textbf{44} & \textbf{3} & \textbf{441} \\
    \midrule
     \ont & 0/4/20 & 1 & 1,047 & 0/5/39 & 2 & \textbf{824} & 0/\textbf{7}/40 & \textbf{3} & 499 \\
    \ontr & 0/\textbf{5}/\textbf{28} & \textbf{2} & \textbf{913} & 0/\textbf{10}/\textbf{40} & 3 & 832 & 0/6/41 & \textbf{3} & 458 \\
    \bottomrule
    \end{tabular}
\end{table}

\subsection{Inference Task}\label{sec:inference}
The overall results are summarized in Table \ref{tab:inference_lms}. We can see that     \ontr clearly outperforms existing geometric model-based embedding methods across all datasets. In particular, on the GO dataset, our method achieves approximately 3 times better H@10 and H@100, and 5 times better MR. This improvement may reflect the advantages of hyperbolic embeddings, as HiT also shows strong results in the GALEN and ANATOMY datasets. 
However, HiT performs poorly on the GO dataset. This suggests that the information encoded in NF2–NF4 axioms, which are more prominent in GO than in other datasets, plays a crucial role. Since HiT does not incorporate this information during training, its performance suffers; such trend has also been reflected in the MR metrics on the prediction task. 
Furthermore, we observe that in most cases, incorporating role embeddings and losses for logical constraints allows \ont to achieve even better performance than both \ontr and HiT. 

The overall performance of OPA2Vec and OWL2Vec* is lower than that of most other methods. This is expected, as both OPA2Vec and OWL2Vec* are prediction-based approaches that evaluate axioms using a binary classifier. Such methods struggle to capture complex logical relationships—like the transitivity of SubClassOf relations—which limits their effectiveness in inference tasks.

% \begin{remark}
% The observed high MR value in both cases is largely due to the extensive candidate space (e.g., >180,000 in GO), where a few extreme outlier cases could contribute disproportionately to the MR. In realistic applications, we expect a much smaller candidate set—either naturally constrained or pre-filtered via techniques like ontology modularization—leading to significantly better performance than reflected by the MR values reported in our experiments.
% \end{remark}

\subsection{Other Results}

\begin{table}[t]
    \centering
    \caption{Ablation study for prediction and inference tasks on GALEN.}
    \label{tab:ablation_study}
        \begin{tabular}{l l l c c c c r}
        \toprule
        \textbf{Task} & \textbf{Method} & \textbf{Language Model} & \textbf{H@1} & \textbf{H@10} & \textbf{H@100} & \textbf{MRR} & \textbf{MR} \\ 
        \midrule
        \multirow{6}{*}{Prediction} & \multirow{3}{*}{\ont}  
                    & all-MiniLM-L6-v2  & 26 & 47 & 64 & 33 & 1,972 \\
        & &          all-MiniLM-L12-v2 & 26 & 46 & 64 & 33 & 1,546 \\
        & &          all-MPNet-base-v2 & 26 & 43 & 63 & 32 & 979 \\
        \cmidrule(lr){2-8}
        & \multirow{3}{*}{\ontr} 
                   & all-MiniLM-L6-v2 & 26 & \textbf{51} & 69 & 35 & 1,131 \\
        & &          all-MiniLM-L12-v2 & 25 & 50 & 69 & 34 & 792 \\
        & &          all-MPNet-base-v2 & \textbf{27} & 50 & \textbf{72} & \textbf{34} & \textbf{630} \\
        \midrule
        \multirow{6}{*}{Inference} & \multirow{3}{*}{\ont}  
        & all-MiniLM-L6-v2  & 0 & \textbf{5} & 23 & 2 & 1,062 \\
& &         all-MiniLM-L12-v2 & 0 & 4 & 20 & 1 & 1,047 \\
& &          all-MPNet-base-v2    & 0 & 2 & 13 & 1 & 1,148 \\
\cmidrule(lr){2-8}
& \multirow{3}{*}{\ontr} 
        & all-MiniLM-L6-v2  & 0 & 3 & 26 & 2 & 923 \\
& &          all-MiniLM-L12-v2 & 0 & \textbf{5} & 28 & 2 & 913 \\
& &          all-MPNet-base-v2    & 0 & 4 & \textbf{34} & 2 & \textbf{630} \\
        \bottomrule
        \end{tabular}
    \end{table}

\subsubsection{Ablation Study}\label{sec:ablation_study}
To evaluate the impact of different language models and loss functions, we follow the methodology of \cite{hit} and experiment with two additional top-performing pre-trained models from the Sentence Transformers library: all-MiniLM-L6-v2 (22.7M parameters) and all-mpnet-base-v2 (109M parameters), in addition to the all-MiniLM-L12-v2 model (33.4M parameters) used in our main experiments. 
From Table \ref{tab:ablation_study}, we can see that the performance differences among these models are relatively small. While the larger model consistently shows better average performance, as indicated by improved MR values, it does not always outperform the smaller models across all evaluation metrics.

\subsubsection{Transfer Learning}\label{sec:transfer_learning}  
We evaluated the OnT and HiT models in a transfer learning paradigm, where each model was trained and evaluated on a source dataset, then tested on a distinct target dataset, using three different datasets in the prediction task. 
The overall transfer learning performance of \ontr and HiT using MiniLM-L12-v2 is illustrated by Figure \ref{fig:target_grouped}. We can see that \ontr and HiT both achieve good transfer abilities, while \ontr performance better, indicated by the consistently lower MR value, and the higher H@100 or MRR value in most of cases. Especially for the cases from GO to GALEN or  ANATOMY.

\begin{figure}[t]
\centering
\includegraphics[width=0.9\textwidth]{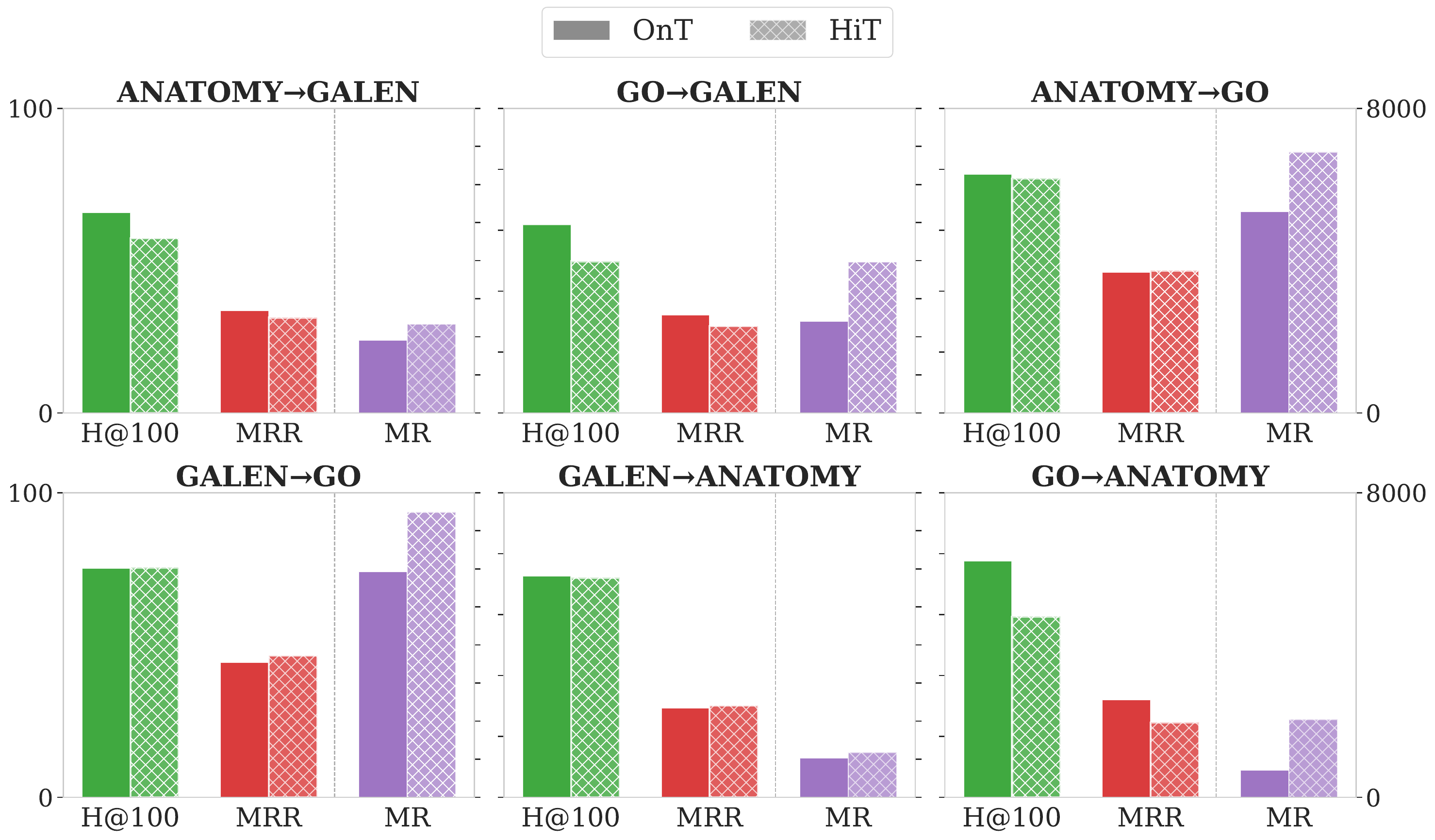}
\caption{Transfer learning results of \ontr and HiT with MiniLM-L12-v2.}
\label{fig:target_grouped}
\end{figure}

\subsubsection{Case Study}\label{sec:case_study} In our case study, we evaluate real-world scenarios encountered during the construction of ontologies, particularly in the development of a new anatomy ontology derived from SNOMED CT. The following two cases, summarized in Figure \ref{tab:case_study}, illustrate the potential of our model as a valuable tool in ontology construction.

\begin{enumerate}
    \item \textit{Missing Subsumptions:} In the manually constructed ontology of SNOMED CT, a direct subsumption is overlooked: ``Stomach structure $\sqsubseteq$ Digestive organ structure''. Our method is proven effective in identifying this missing subsumption, consistently assigning it a higher score than other existing superclasses of the ``Stomach structure'' within the constructed ontology.
    \item \textit{Erroneous (Direct) Subsumptions:} We detect an incorrect direct Superclass of ``Bone structure of upper limb'' as ``Structure of appendicular skeleton'', which is incorrect as ``Bone structure of extremity'' should have such a parent. Our model effectively identifies this erroneous relationship by consistently assigning it the lowest score among all existing superclasses.
\end{enumerate}

    \begin{figure}[t]
        \centering
        \begin{tikzpicture}[>=stealth, node distance=2.5cm,
          concept/.style={draw, rounded corners, fill=gray!10, minimum width=3.5cm, text width=3.8cm, align=center},
          narrow_concept/.style={draw, rounded corners, fill=gray!10, minimum width=2.5cm, text width=2cm, align=center},
          missing/.style={draw=green!50!black, very thick, dashed},
          incorrect/.style={draw=orange!50!black, very thick, dotted},
          normal/.style={draw=black, thick},
          lbl/.style={fill=white, font=\footnotesize, align=center},
          scale=0.95
        ]
        
        % Case 1: Missing Hierarchy - Parents on top, children below
        \node[narrow_concept] (dos) at (-4.5,-2) {Digestive organ structure};
        \node[narrow_concept] (sds) at (-1.8,-2) {Stomach and/or duodenal structures};
        \node[narrow_concept] (egs) at (1,-2) {Esophageal and/or gastric structures};
        \node[narrow_concept] (soac) at (4,-2) {Structure of organ within abdomen proper cavity};
        \node[concept] (ss) at (-2,-4.5) {Stomach structure};
        
        % Missing edge - highlighted (upward arrow from child to parent)
        \draw[->, missing] (ss) to[bend left=30] node[lbl, near end] {{\color{blue}-8.1/-5.5/-4.6}} (dos);
        
        % Existing edges (upward arrows from child to parents)
        \draw[->] (ss) to[bend right=20] node[lbl, near end ] {{\color{red}-13.3/-10.8/-12.2}} (egs);
        \draw[->] (ss) to[bend right=25] node[lbl, midway ] {-11.4/-9.7/-10.8} (soac);
        \draw[->] (ss) to[bend left=15] node[lbl,near start] {-10.7/-8.4/-9.3} (sds);
        
        % Separate the two cases
        \draw[dashed, gray] (-6,-5.1) -- (6,-5.1);
        
        % Case 2: Erroneous Hierarchy - with proper hierarchy
        \node[concept] (sas) at (3,-5.7) {Structure of appendicular skeleton};
        \node[concept] (msul) at (-3,-7.2) {Musculoskeletal structure of upper limb};
        \node[concept] (bse) at (3.5,-7.2) {Bone structure of extremity};
        \node[concept] (bsul) at (0,-9) {Bone structure of upper limb};
        
        % Incorrect edge - highlighted (upward arrow from child to parent)
        \draw[->, incorrect] (bsul) to[bend left=35] node[lbl] {{\color{red}-10.9/-6.8/-7.5}} (sas);
        
        % Correct edges (upward arrows from child to parents)
        \draw[->] (bsul) to[normal] node[lbl] {{\color{blue}-6.7/-5.9/-6.7}} (msul);
        \draw[->] (bsul) to[normal] node[lbl] {-10.0/-6.5/{\color{blue}-6.7}} (bse);
        
        % Adding hierarchical relationship between SAS and BSE
        \draw[->, normal] (bse) to (sas);
        
        % Legend - all in one horizontal line
        \node at (0,-10) {
          \begin{tabular}{cccccc}
            \begin{tikzpicture}[baseline=-0.5ex] \draw[->,missing] (0,0) -- (0.8,0); \end{tikzpicture} & Missing subsumption &
            % add emptyspace 
            \ \ \ \ \ \ \ \ 
            \begin{tikzpicture}[baseline=-0.5ex] \draw[->,incorrect] (0,0) -- (0.8,0); \end{tikzpicture} & Non-direct subsumption &
            %\begin{tikzpicture}[baseline=-0.5ex] \draw[normal] (0,0) -- (0.8,0); \end{tikzpicture} & subsumption \\
          \end{tabular}
        };
        \end{tikzpicture}
        % \caption{Subsumptions are represented by arrows with three scores from \ontr models trained on GALEN/GO/ANATOMY. Blue and red highlights indicate highest and lowest scores.}
        % \label{tab:case_study}
        \caption{Case Study: Arrows $C\rightarrow D$ represent subsumption $C\sqsubseteq D$ with scores from Eq. \ref{eq:scoring}. Each arrow shows three scores from three \ontr models trained on GALEN/GO/ANATOMY ontologies, respectively. A higher score indicates a more likely subsumption. Blue/red highlights indicate the highest/lowest scores for all subsumptions with the same subclass.}
        \label{tab:case_study}
        \end{figure}
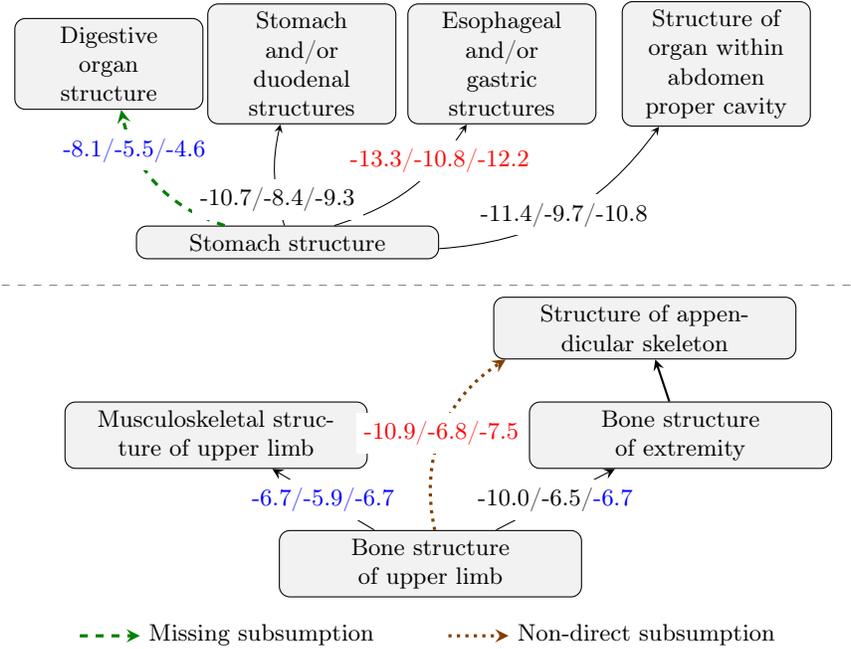

\section{Conclusion and Future Work}
In this study, we introduce \ontr, which integrates geometric models with language models to derive ontology embeddings for concepts and roles. Through extensive experiments on real-world ontologies, we demonstrate that our approach achieves state-of-the-art performance in both prediction (inductive reasoning) and inference (deductive reasoning) tasks. Furthermore, our method exhibits strong transfer learning capabilities, suggesting its potential for real-world applications in related domains.  

Looking ahead, our future research aims to merge our current methodologies with other hierarchical embedding techniques, such as \cite{EntCone,yang2025regd}. Additionally, we are keen to extend our methods to more complex ontology languages, such as extending to $\mathcal{ALC}$ with the negation logical operator $\neg$, or delve deeper into the logical patterns of roles using the role embeddings generated by \ontr, including investigating role inclusion axioms. It would also be interesting to conduct a more thorough analysis of our model, such as exploring the impact of verbalization quality, or performance across a wider range of ontologies beyond those currently utilized.

\paragraph{Supplementary Materials} All supplementary materials, including the code and dataset, are available at \url{https://github.com/HuiYang1997/OnT}.

\bibliographystyle{splncs04}
\bibliography{ontE}
\end{document}